\documentclass[journal]{IEEEtran}  

\usepackage{amsmath,amssymb,amsthm,bm,mathtools}
\usepackage{booktabs}
\usepackage{multirow}
\usepackage{rotating}
\usepackage{algorithm}
\usepackage{algorithmic}
\usepackage{graphicx}
\usepackage[hidelinks]{hyperref}
\usepackage{xcolor}

\newtheorem{theorem}{Theorem}
\newtheorem{lemma}{Lemma}
\newtheorem{proposition}{Proposition}
\newtheorem{corollary}{Corollary}
\theoremstyle{definition}

\newtheorem{assumption}{Assumption}
\newtheorem{remark}{Remark}

\title{\LARGE \bf From Optical Flow to Safety: Visual Safety Certification via Scale-Free Relative Motion}
\author{Bazeela Banday and Jishnu Keshavan%
\thanks{ The authors are with the Department of Mechanical Engineering, Indian Institute of Science. \tt\small \{bazeelab, kjishnu\}@iisc.ac.in}
}

\begin{document}
\maketitle
\thispagestyle{empty}
\pagestyle{empty}

\begin{abstract}
Safe autonomous navigation using monocular vision remains challenging
because explicit obstacle states are not directly observable. This
letter introduces a visual safety-certification paradigm that uses
scale-free, depth-normalized relative motion encoded by monocular
optical flow as the representation for collision-risk certification,
without reconstructing obstacle position, velocity, geometry, or a
metric world model. Unlike conventional optical-flow-based navigation approaches that rely on image motion to shape avoidance behavior, the proposed framework directly encodes collision-cone geometry and safety constraints in this relative-motion representation. An online Concurrent
Learning observer estimates the normalized relative velocity and
provides a computable estimation-error bound. This uncertainty is
propagated into the collision cone through angular inflation and a
monocular robot-footprint margin, yielding a Robust Vision-Based
Collision Cone Control Barrier Function (RV-C$^3$BF). An optimal-decay
CBF quadratic program minimally modifies the nominal control input to
enforce the resulting robust safety constraint under the stated
assumptions. Extensive simulation and real-world experiments on a TurtleBot3 in cluttered environments with static and dynamic obstacles demonstrate collision avoidance using
only monocular vision and onboard proprioception, with the estimation-error bound containing the measured error in $98.3\%$ of simulation samples and the robot achieving collision-free navigation in $93$ of $100$ real-world trials.
\end{abstract}

\begin{IEEEkeywords}
Collision avoidance, control barrier functions, optical flow, vision-based control, safety-critical control.
\end{IEEEkeywords}

\section{Introduction}
\label{sec:intro}

Safe autonomous navigation in dynamic and unstructured environments
requires robots to reason about collision risk despite incomplete
knowledge of their surroundings. Monocular cameras offer a lightweight
sensing modality for mobile robots but do not directly provide metric
depth or explicit obstacle states. Optical flow, however, encodes
scene motion relative to the camera without requiring explicit
three-dimensional reconstruction \cite{srinivasan2000honeybees}.
This raises a fundamental question: can collision risk be formally
certified directly from optical flow, without reconstructing a metric
representation of the environment?

Optical flow has long been used to guide collision-avoidance
behavior in autonomous systems
\cite{deng2026autonomous,hu2025seeing}. However, its use as a
representation for formal safety certification remains largely
unexplored. Existing optical-flow-based methods primarily use image
motion to generate or condition avoidance actions, rather than to
construct explicit safety constraints. This work addresses this gap by
introducing a visual safety-certification paradigm that bypasses
obstacle-state reconstruction and constructs collision geometry and
robust safety constraints directly from the scale-free relative motion
encoded by optical flow.

Control Barrier Functions (CBFs) provide a formal framework for
enforcing safety through optimization-based control filters
\cite{ames2019control}. Collision Cone Control Barrier Functions
(C$^3$BFs) characterize collision risk through relative-motion
direction \cite{tayal2026collision}, but typically rely on explicit
obstacle states or geometric information. Recent vision-based CBF
approaches use RGB-D sensing \cite{abdi2023safe}, dense point clouds
\cite{de2024point}, vision-language models \cite{chen2026dynamic},
or learned multi-camera perception \cite{kapoor2025visafe}.
Measurement-robust CBFs account for perception uncertainty
\cite{cosner2021measurement,cosner2022self}, but require uncertainty
bounds defined over an explicit measurement representation. Thus, existing approaches either rely on explicit geometric models or use image motion without formal safety certification.

\begin{figure}[t]
    \centering
    \includegraphics[width=\columnwidth]{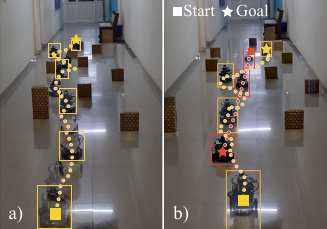}
    \caption{\textbf{Monocular vision-based safety certification in real-world navigation.}
    Collision geometry is constructed from scale-free relative motion obtained
    through monocular optical flow and onboard proprioception, without explicit
    obstacle-state reconstruction. (a) Navigation through a corridor with static
    obstacles. (b) Navigation among static obstacles and a moving obstacle.
    Yellow: robot; red: moving obstacle; square: start; star: goal.}
    \label{fig:main}
\end{figure}

The key insight is that collision-cone reasoning fundamentally depends
on relative-motion direction rather than metric obstacle state. Under
perspective projection, translational optical flow is directly related
to depth-normalized relative velocity, preserving its relative-motion
direction while eliminating metric scale. Together with the
line-of-sight direction, these quantities provide the information
required to construct collision-cone geometry. Thus, monocular optical
flow, combined with onboard proprioception, provides a scale-free
representation in which collision risk can be certified directly,
without reconstructing obstacle position, velocity, geometry, or
a metric world model. Rather than treating optical flow merely as a cue for
avoidance behavior, we use it as the underlying representation for
formal collision-risk certification. Instead of replacing one state estimate with another, we show that collision-cone safety certification can be performed directly in the monocular visual motion representation.

Building on this representation, we develop the estimation and safety
machinery required to make the certification robust to visual estimation
errors. An online Concurrent Learning (CL) observer \cite{chowdhary2010concurrent} estimates the
depth-normalized relative velocity and provides a computable
estimation-error bound. This uncertainty is propagated into the
collision-cone geometry through angular inflation and combined with a
monocular robot-footprint margin, yielding the Robust Vision-Based
Collision Cone Control Barrier Function (RV-C$^3$BF). An optimal-decay
CBF quadratic program (QP) \cite{zeng2021optimal} then enforces the
resulting robust safety constraint while minimally modifying the nominal
control input. The complete framework is implemented onboard a TurtleBot3
and validated in cluttered environments containing both static and
dynamic obstacles. 

The main contributions are:
\begin{itemize}

    \item  A visual safety-certification paradigm showing that
    scale-free relative motion, naturally encoded by monocular optical
    flow, suffices for collision-cone safety certification without
    explicit obstacle-state or geometric world-model reconstruction.

    \item Online estimation of depth-normalized relative velocity directly from image measurements, with a computable error bound that feeds the safety certificate.

    \item The RV-C$^3$BF that propagates representation uncertainty and a
    monocular robot-footprint margin into the collision-cone constraint
    for robust safety enforcement.

    \item Extensive simulation and real-world validation on a TurtleBot3 navigating cluttered environments populated with static and dynamic obstacles using only monocular vision and onboard proprioception.
\end{itemize}

\section{Preliminaries and Background}
\label{sec:prelim}



\subsection{Vision Perception Pipeline}
Let $I(\boldsymbol{p},t)$ denote image intensity at pixel
$\boldsymbol{p}=[u,v]^\top$. Brightness constancy yields the
optical-flow constraint
$I_u f_u+I_v f_v+I_t=0$, where
$\boldsymbol{f}_{\mathrm{total}}=[f_u,f_v]^\top$ is the optical flow, estimated using the Lucas--Kanade algorithm \cite{lucas1981iterative}.

\begin{figure*}[t]
\includegraphics[width=\textwidth]{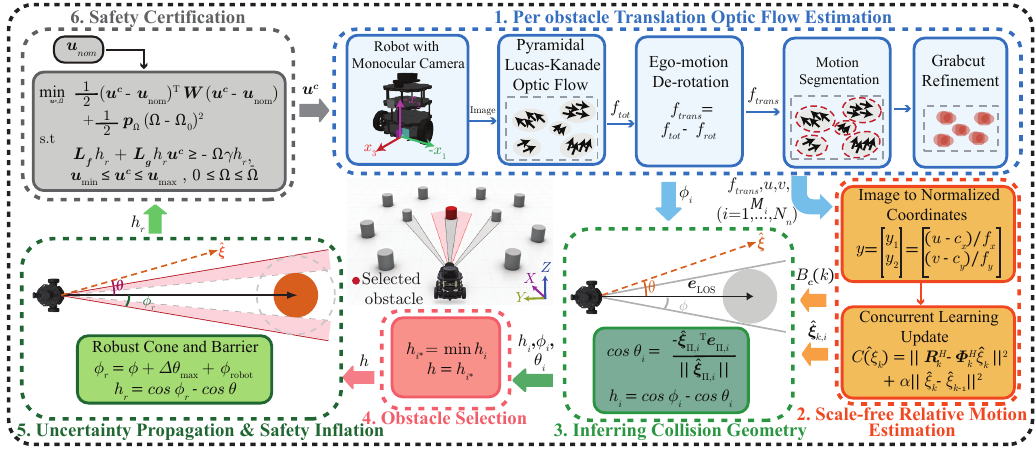}
\caption{\textbf{Monocular vision-based perception-to-control pipeline.}
De-rotated optical flow and onboard proprioception yield scale-free relative motion, estimated online using Concurrent Learning (CL). The nominal V-C$^3$BF identifies the most critical collision cone,
which is robustified using the CL estimation-error bound and monocular footprint margin to form the RV-C$^3$BF. An optimal-decay CBF-QP enforces the resulting safety constraint. The resulting loop performs safety-critical navigation using
monocular vision and onboard proprioception, with collision-cone safety
certification performed directly in the native geometry of the sensing
representation, without explicit reconstruction of obstacle position,
velocity, or geometry.}
\label{fig:BD}
\end{figure*}

The perception pipeline extracts image-space motion for collision-cone
construction from monocular image sequences. Camera rotation is
compensated, independently moving regions are identified, and obstacle
tracks are maintained for relative-motion estimation.

Pyramidal Lucas--Kanade estimates optical flow between consecutive
frames. The rotation-induced component, computed from onboard
proprioception, is removed as $\boldsymbol{f}_{\mathrm{trans}}
=
\boldsymbol{f}_{\mathrm{total}}
-
\boldsymbol{f}_{\mathrm{rot}},$
where $\boldsymbol{f}_{\mathrm{rot}}$ denotes the
rotation-induced optical flow. The compensated flow is clustered using DBSCAN \cite{ester1996density}
to identify candidate obstacle regions. GrabCut \cite{rother2004grabcut}
refines these regions into dense masks, which are associated across
frames using an image-space tracker.

For each tracked obstacle $i$, the perception module outputs $\mathcal P_i
=
\left\{
\boldsymbol{p}_i,\,
\boldsymbol{f}_{\mathrm{trans},i},\,
\mathcal M_i,\,
\mathcal T_i
\right\},$ where $\boldsymbol{p}_i$ denotes the tracked pixel locations,
$\boldsymbol{f}_{\mathrm{trans},i}$ the compensated flow,
$\mathcal{M}_i$ the obstacle mask, and $\mathcal{T}_i$ the persistent
track identifier (Fig.~\ref{fig:BD}).  Camera intrinsics map the pixel
locations and compensated flow to the normalized image coordinates and
velocities used by the CL observer as
described in Sec.~\ref{sec:cl}. Thus, the perception interface remains
entirely image-based, providing the motion measurements and temporal
associations required for relative-motion estimation without object
recognition or explicit obstacle-state or geometric reconstruction.

\subsection{Optimal-Decay Control Barrier Functions}
For $\dot{x}=f(x)+g(x)u$, let
$\mathcal{C}=\{x:h(x)\geq0\}$ be a safe set defined by a continuously
differentiable function $h$. Forward invariance is ensured when an
admissible input satisfies
$\dot{h}(x,u)+\gamma h(x)\geq0$, $\gamma>0$.
The optimal-decay CBF formulation \cite{zeng2021optimal} introduces
a decay multiplier $\Omega$ and jointly optimizes it with $u$:
$\dot{h}(x,u)+\Omega\gamma h(x)\geq0$,
yielding a quadratic program subject to actuator constraints.

\section{Concurrent Learning Observer}
\label{sec:cl}

The tracked image measurements are used to estimate the
depth-normalized relative velocity required for collision-cone
construction. We formulate this estimation using Concurrent Learning (CL),
where a finite history of image-space measurements enables online
estimation and certification of the estimation error under the stated
assumptions.

Let $\boldsymbol{x}=[x_1,x_2,x_3]^\top$ denote the obstacle position
in the camera frame and
$\boldsymbol{v}_{\mathrm{rel}}
=[v_{\mathrm{rel},1},v_{\mathrm{rel},2},v_{\mathrm{rel},3}]^\top$
the relative translational velocity. The normalized relative velocity is
defined as
\begin{equation}
\boldsymbol{\xi}
=
\frac{\boldsymbol{v}_{\mathrm{rel}}}{x_3}
=
\begin{bmatrix}
v_{\mathrm{rel},1}/x_3 &
v_{\mathrm{rel},2}/x_3 &
v_{\mathrm{rel},3}/x_3
\end{bmatrix}^{\!\top}.
\label{eq:xidef}
\end{equation}

The observer estimates $\boldsymbol{\xi}$ directly from image-space
measurements, without explicitly reconstructing obstacle depth or
relative velocity.

\subsection{Image-Space Relative Motion Model}
\label{sec:image-model}

Under the pinhole model, $\boldsymbol{x}$ projects onto the normalized image
plane as $y_1=x_1/x_3=(u-c_x)/f_x$ and $y_2=x_2/x_3=(v-c_y)/f_y$, with
principal point $(c_x,c_y)$ and focal lengths $(f_x,f_y)$. Let $\boldsymbol{y}=[y_1,y_2]^\top$. After rotational compensation discussed above, the compensated flow
$\boldsymbol{f}_{\mathrm{trans}}=[f_{\mathrm{trans},u},f_{\mathrm{trans},v}]^\top$
gives $\dot y_1=f_{\mathrm{trans},u}/(f_x\Delta t)$ and
$\dot y_2=f_{\mathrm{trans},v}/(f_y\Delta t)$, with $\Delta t$ the
sampling interval. The normalized image velocity then satisfies the
linear regression $\dot{\boldsymbol{y}}
=
\tilde{\boldsymbol{\phi}}(\boldsymbol{y})
\boldsymbol{\xi}$, with $\tilde{\boldsymbol{\phi}}(\boldsymbol{y})
= [1,\;0, \;-y_1;\;0,\;1,\;-y_2]$. This regression expresses the depth-normalized relative velocity
required for collision-cone construction directly in image-space
motion.

\subsection{Observer Design}
For frame $j$, stacking the normalized image-velocity measurements of the $n_j$ pixels within obstacle $i$'s mask gives
\begin{equation}
\boldsymbol{r}_j{=}\boldsymbol{\Phi}^{(j)}\boldsymbol{\xi}_j,\,
\boldsymbol{\Phi}^{(j)}{=}\big[\tilde{\boldsymbol{\phi}}(\boldsymbol{y}_j^{(1)})^\top\cdots
\tilde{\boldsymbol{\phi}}(\boldsymbol{y}_j^{(n_j)})^\top\big]^\top,
\label{eq:r-phi}
\end{equation}
where $r_j\in\mathbb{R}^{2n_j}$ stacks the measured normalized
image velocities, and $\Phi^{(j)}$ is the corresponding regression
matrix.
Estimating $\boldsymbol{\xi}$ from only the instantaneous regression
can be sensitive to image noise, temporary feature loss, and insufficient
excitation. To improve robustness, the proposed CL estimator augments the current measurement with a finite history of
informative image-space observations. Accordingly, the observer retains
the most recent $N$ frame blocks in a FIFO history stack, $\boldsymbol{R}^H_k=[\boldsymbol{r}_{k-N+1};\dots;\boldsymbol{r}_k]$,
$\boldsymbol{\Phi}^H_k=[\boldsymbol{\Phi}^{(k-N+1)};\dots;\boldsymbol{\Phi}^{(k)}]$. Since
$\boldsymbol{\xi}$ may vary over the window,
\begin{equation}
\boldsymbol{R}_k^{H}
=
\boldsymbol{\Phi}_k^{H}\boldsymbol{\xi}_k
+
\boldsymbol{E}_k,
\label{eq:stacked_regression}
\end{equation}
with
$\boldsymbol{E}_k=[\boldsymbol{\Phi}^{(k-N+1)}(\boldsymbol{\xi}_{k-N+1}-\boldsymbol{\xi}_k);\dots;
\boldsymbol{\Phi}^{(k-1)}(\boldsymbol{\xi}_{k-1}-\boldsymbol{\xi}_k);\boldsymbol{0}]$ capturing the
mismatch from the variation of $\boldsymbol{\xi}$ over the window. The estimate
minimizes the regularized cost
$C(\hat{\boldsymbol{\xi}}_k)=\|\boldsymbol{R}^H_k-\boldsymbol{\Phi}^H_k\hat{\boldsymbol{\xi}}_k\|^2
+\alpha\|\hat{\boldsymbol{\xi}}_k-\hat{\boldsymbol{\xi}}_{k-1}\|^2$ with $\alpha>0$,
giving
\begin{equation}
\hat{\boldsymbol{\xi}}_k
{=}
\hat{\boldsymbol{\xi}}_{k{-}1}
{+}
\left[
(\boldsymbol{\Phi}_k^{H})^\top
\boldsymbol{\Phi}_k^{H}
{+}
\alpha I
\right]^{-1}
(\boldsymbol{\Phi}_k^{H})^\top
\left(
\boldsymbol{R}_k^{H}
{-}
\boldsymbol{\Phi}_k^{H}
\hat{\boldsymbol{\xi}}_{k{-}1}
\right).
\label{eq:cl_update}
\end{equation}

\begin{remark}[FIFO History]
\label{rem:fifo}
A FIFO history is used instead of greedy excitation-maximizing
selection. Although greedy selection improves excitation, temporally
distant measurements may violate the bounded-variation assumption.
FIFO preserves temporal locality at the cost of non-monotone excitation.
\end{remark}

\subsection{Certified Estimation Error Bound}
\label{subsec:certified_bound}

\begin{assumption}[Finite-Time Excitation Condition]
\label{ass:PE}
There exist $\gamma_{\min},\gamma_{\max}>0$ and an integer $T_1>0$ with $\gamma_{\min} I
\preceq
(\boldsymbol{\Phi}_k^{H})^\top\boldsymbol{\Phi}_k^{H}
\preceq
\gamma_{\max} I, \forall k\geq T_1 .$
\end{assumption}

\begin{assumption}[Slowly Varying Relative Motion]
\label{ass:SV}
There exists $\delta_\xi>0$ : $\|\boldsymbol{\xi}_j-\boldsymbol{\xi}_k\|\le (k-j)\delta_\xi$ for $j=k-N+1,\ldots,k$.
\end{assumption}

\begin{remark}[Bounded History Disturbance]
\label{rem:disturbance}
Under Assumption~\ref{ass:SV}, $\|\boldsymbol{E}_k\|\le\bar E$ and
$\|(\boldsymbol{\Phi}^H_k)^\top \boldsymbol{E}_k\|\le\eta_E$ for some $\bar E,\eta_E>0$.
\end{remark}

\begin{theorem}[Certified Estimation Error Bound]
\label{thm:uub}
Under Assumptions~\ref{ass:PE}--\ref{ass:SV}, there exists
$\rho\in(0,1)$ such that the estimation error
$\tilde{\boldsymbol{\xi}}_{k}=\hat{\boldsymbol{\xi}}_{k}-\boldsymbol{\xi}_{k}$
satisfies, for all $k\geq T_1$,
\begin{equation}
\|\tilde{\boldsymbol{\xi}}_{k}\|
\leq
\rho^{k-T_1}\|\tilde{\boldsymbol{\xi}}_{T_1}\|
+
\frac{D\left(1-\rho^{k-T_1}\right)}{1-\rho}
=:B_c(k),
\label{eq:uub_bound}
\end{equation}
where $D=\rho\delta_\xi+\eta_E/(\gamma_{\min}+\alpha)$.
\end{theorem}

\begin{proof}
Substituting \eqref{eq:stacked_regression} into \eqref{eq:cl_update} and
writing $M_k=(\boldsymbol{\Phi}_k^H)^\top\boldsymbol{\Phi}_k^H+\alpha I$ gives the error
dynamics $\tilde{\boldsymbol{\xi}}_k
=
A_k\tilde{\boldsymbol{\xi}}_{k-1}
+
\boldsymbol{w}_k ,$
with
$A_k=I-M_k^{-1}(\boldsymbol{\Phi}_k^H)^\top\boldsymbol{\Phi}_k^H=\alpha M_k^{-1}$
and
$\boldsymbol{w}_k=-A_k(\boldsymbol{\xi}_k-\boldsymbol{\xi}_{k-1})
+M_k^{-1}(\boldsymbol{\Phi}_k^H)^\top\boldsymbol{E}_k$.
Under Assumption~\ref{ass:PE} the eigenvalues $\lambda_i$ of
$(\boldsymbol{\Phi}_k^H)^\top\boldsymbol{\Phi}_k^H$ lie in
$[\gamma_{\min},\gamma_{\max}]$, so those of $A_k$ are
$\mu_i=\alpha/(\lambda_i+\alpha)$ and
$|\mu_i|\leq\alpha/(\gamma_{\min}+\alpha)=:\rho<1$  since
$\gamma_{\min}>0$; hence $A_k$ is a contraction for all $k$. Using
Assumption~\ref{ass:SV}, Remark~\ref{rem:disturbance}, and
$\|M_k^{-1}\|\leq(\gamma_{\min}+\alpha)^{-1}$ yields
$\|\boldsymbol{w}_k\|\leq\rho\delta_\xi+\eta_E/(\gamma_{\min}+\alpha)=:D$, so
$\|\tilde{\boldsymbol{\xi}}_k\|\leq\rho\|\tilde{\boldsymbol{\xi}}_{k-1}\|+D$.
Recursion from $T_1$ to $k$ gives \eqref{eq:uub_bound}.
\end{proof}


\section{Vision-Based Collision Cone Control Barrier Function}
\label{sec:Vc3BF}

The CL observer provides the relative-motion estimate
$\hat{\boldsymbol{\xi}}$ and its error bound $B_c(k)$. Combined with
the image-derived line-of-sight direction, these define the nominal
collision cone, which is robustified to account for estimation
uncertainty and robot footprint.

\subsection{\texorpdfstring{Nominal Vision-Based Collision Cone (V-C$^3$BF)}
{Nominal Vision-Based Collision Cone (V-C3BF)}}
\label{subsec:NVc3BF}

The collision-cone principle characterizes risk through the direction of
relative motion with respect to the obstacle line of sight. Since the
collision cone is constructed in the horizontal plane $(\Pi)$, we define
$\hat{\boldsymbol{\xi}}_{\Pi} =
[\hat{\xi}_1\;\hat{\xi}_3]^\top$ and
$\boldsymbol{e}_{\mathrm{LOS},\Pi} =
[e_{\mathrm{LOS},1}\;e_{\mathrm{LOS},3}]^\top$. The angle $\theta$
between the relative-motion direction and the line of sight is given by
$\cos\theta =
-(\hat{\boldsymbol{\xi}}_{\Pi}^{\top}\boldsymbol{e}_{\mathrm{LOS},\Pi})/
(\|\hat{\boldsymbol{\xi}}_{\Pi}\|\|\boldsymbol{e}_{\mathrm{LOS},\Pi}\|)$.

The half-angle $\phi$ follows from the obstacle's apparent angular size
in the image, projected onto the same horizontal plane. The nominal
barrier is
\begin{equation}
h=\cos\phi-\cos\theta,
\qquad
\mathcal{C}=\{\hat{\boldsymbol{\xi}}_{\Pi}:h\geq0\},
\label{eq:vcbf}
\end{equation}
so $h\geq0$ corresponds to the estimated relative-motion direction lying outside the collision cone ($\theta\geq\phi$).
All quantities in the nominal barrier $h$ are obtained from monocular
image measurements, requiring neither metric obstacle state nor depth
sensing.

For multiple tracked obstacles, the nominal safe set is the intersection
of the per-obstacle safe sets \cite{molnar2023composing}. The
implementation selects
$i^\star=\arg\min_i h_i$, with $h_{\min}=h_{i^\star}$, and applies
the uncertainty and footprint margins to this obstacle to obtain
$h_r=h_{r,i^\star}$, which is enforced by the safety filter.

\subsection{\texorpdfstring{Robust Vision-Based Collision Cone (RV-C$^3$BF)}
{Robust Vision-Based Collision Cone (RV-C3BF)}}
\label{subsec:rvc3bf}

The nominal V-C$^3$BF assumes an exact relative-motion direction and a
point robot. The RV-C$^3$BF accounts for both effects by converting the
online estimation uncertainty into an angular margin and augmenting it
with a conservative robot-footprint margin.

Let $a_{\max}$ bound the
relative acceleration, $v_{\mathrm{rel},\max}$ the relative speed, $x_{3,\min}$ the
depth of any tracked obstacle, $\omega_{\max}$ the robot yaw rate, and $y_{\max}$ the normalized image coordinates,
$\|\boldsymbol y\|\le y_{\max}$. Then
$\|\boldsymbol\xi_k\|\le\xi_{\max}:=v_{\mathrm{rel},\max}/x_{3,\min}$ and
$\|\boldsymbol\xi_k-\boldsymbol\xi_{k-1}\|\le\delta_\xi$, where
$\delta_\xi=\Delta t_{\max}\big(a_{\max}/x_{3,\min}+\xi_{\max}^2
+\omega_{\max}(1+y_{\max})\,\xi_{\max}\big)$ and $\Delta t_{\max}$ is the
maximum interval between consecutive processed frames. For a FIFO window
holding $N_k\le N$ blocks, each containing at most $n_{\max}$ flow pixels
satisfying $\|\boldsymbol y\|\le y_{\max}$, define
$\bar E_k=\bar\Phi\,\delta_\xi\sqrt{(N_k-1)N_k(2N_k-1)/6}$, where
$\bar\Phi=\sqrt{n_{\max}(1+y_{\max}^2)}$. Let $\gamma_{\min}(k)$ and
$\gamma_{\max}(k)$ denote the extreme eigenvalues of
$(\boldsymbol\Phi_k^H)^\top\boldsymbol\Phi_k^H$, and define
$\rho_k=\alpha/(\gamma_{\min}(k)+\alpha)$ and
$D_k=\rho_k\delta_\xi+\sqrt{\gamma_{\max}(k)}\,\bar E_k/
(\gamma_{\min}(k)+\alpha)$. Each new or re-acquired track is initialized
with $\hat{\boldsymbol\xi}=\mathbf 0$ and $B_c=\xi_{\max}$, and the bound
is updated as $B_c(k)=\rho_kB_c(k-1)+D_k$. Since
$\|A_k\|=\rho_k\le1$ for every $k$, the error dynamics in the proof of
Theorem \ref{thm:uub} give
$\|\tilde{\boldsymbol\xi}_k\|\le\rho_k\|\tilde{\boldsymbol\xi}_{k-1}\|+D_k$,
so $B_c(k)$ bounds $\|\tilde{\boldsymbol\xi}_k\|$ under the measurement
model (\ref{eq:stacked_regression}) from track initialization onward. Since
projection onto $\Pi$ is non-expansive,
$\|\tilde{\boldsymbol\xi}_{\Pi,k}\|\le B_c(k)$, where
$\boldsymbol\xi_\Pi:=[\xi_1\;\xi_3]^\top$.

\subsubsection{Uncertainty-to-Angle Propagation}

Because the cone constrains a direction, the error bound must be
expressed in angular form. Let $\Delta\theta$ be the angular deviation
between the estimated and true relative-motion directions in $\Pi$.

\begin{lemma}[Maximum Angular Deviation]
\label{lem:angle}
Let $\tilde{\boldsymbol{\xi}}=\hat{\boldsymbol{\xi}}-\boldsymbol{\xi}$
satisfy $\|\tilde{\boldsymbol{\xi}}\|\le B_c(k)$ with
$\boldsymbol{\xi}_{\Pi}\neq\boldsymbol0$. If
$\|\hat{\boldsymbol{\xi}}_{\Pi}\|>B_c(k)$, then
\begin{equation}
\Delta\theta
\le
\arcsin\!\left(
\frac{B_c(k)}{\|\hat{\boldsymbol{\xi}}_{\Pi}\|}
\right)
=
\Delta\theta_{\max}.
\label{eq:lemma1-revised}
\end{equation}
\end{lemma}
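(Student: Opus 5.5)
The plan is to reduce the claim to a planar geometric fact about a vector $\hat{\boldsymbol{\xi}}_\Pi$ and a perturbation of bounded norm. First I restrict everything to the horizontal plane $\Pi$. Since projection onto $\Pi$ is non-expansive, as noted just before the lemma, we have $\|\tilde{\boldsymbol{\xi}}_\Pi\|\le\|\tilde{\boldsymbol{\xi}}\|\le B_c(k)$, and hence $\boldsymbol{\xi}_\Pi=\hat{\boldsymbol{\xi}}_\Pi-\tilde{\boldsymbol{\xi}}_\Pi$ lies in the closed disk $\mathcal{D}$ of radius $B_c(k)$ centred at $\hat{\boldsymbol{\xi}}_\Pi$. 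The deviation $\Delta\theta\in[0,\pi]$ is the unsigned angle between $\hat{\boldsymbol{\xi}}_\Pi$ and $\boldsymbol{\xi}_\Pi$, which is well defined because $\boldsymbol{\xi}_\Pi\neq\boldsymbol 0$ and, by hypothesis, $\|\hat{\boldsymbol{\xi}}_\Pi\|>B_c(k)\ge0$.

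The key step is to bound the angle seen from the origin of any point $\boldsymbol{z}\in\mathcal{D}$. Write $\hat{\boldsymbol{\xi}}_\Pi=a\boldsymbol{e}$ with $a=\|\hat{\boldsymbol{\xi}}_\Pi\|$ and $\|\boldsymbol{e}\|=1$. Decompose $\boldsymbol{z}=\hat{\boldsymbol{\xi}}_\Pi-\tilde{\boldsymbol{z}}$ into components along $\boldsymbol{e}$ and orthogonal to it. Then $\boldsymbol{z}^\top\boldsymbol{e}\ge a-\|\tilde{\boldsymbol{z}}\|>0$, so the origin is outside $\mathcal{D}$ and $\Delta\theta<\pi/2$. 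This step is exactly where the hypothesis $\|\hat{\boldsymbol{\xi}}_\Pi\|>B_c(k)$ is used: without it the disk contains the origin and $\Delta\theta$ can reach $\pi$.

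Next I would show $\sin\Delta\theta\le B_c/a$. Here $\sin\Delta\theta=\mathrm{dist}(\hat{\boldsymbol{\xi}}_\Pi,\mathrm{span}\,\boldsymbol{z})/a$, because the perpendicular distance from $\hat{\boldsymbol{\xi}}_\Pi$ to the line through $\boldsymbol{z}$ equals $a\sin\Delta\theta$. That distance is at most $\|\hat{\boldsymbol{\xi}}_\Pi-\boldsymbol{z}\|=\|\tilde{\boldsymbol{\xi}}_\Pi\|\le B_c(k)$. Since $\Delta\theta\in[0,\pi/2)$ and $\arcsin$ is increasing on $[0,1]$, it follows that $\Delta\theta\le\arcsin(B_c(k)/a)$. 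The argument of the $\arcsin$ lies in $[0,1)$, so the bound is well defined.

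I expect the main subtlety to be making the acute-angle step rigorous. The inequality $\sin\Delta\theta\le B_c/a$ alone does not exclude obtuse angles, so the positivity of $\boldsymbol{z}^\top\boldsymbol{e}$ must be established first. I would also remark that the bound is tight: it is attained when $\tilde{\boldsymbol{\xi}}_\Pi$ is chosen so that $\boldsymbol{\xi}_\Pi$ lies on a tangent line from the origin to $\mathcal{D}$. This matches the classical tangent-cone picture and justifies calling the bound $\Delta\theta_{\max}$.
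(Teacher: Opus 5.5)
Your proof is correct and follows the paper's argument. You use non-expansiveness of the projection, the acute-angle observation from $\|\hat{\boldsymbol{\xi}}_\Pi\|>B_c(k)$, and the bound $\|\hat{\boldsymbol{\xi}}_\Pi\|\sin\Delta\theta=\mathrm{dist}(\hat{\boldsymbol{\xi}}_\Pi,\mathrm{span}\,\boldsymbol{\xi}_\Pi)\le\|\tilde{\boldsymbol{\xi}}_\Pi\|$, which is exactly the paper's step $|\boldsymbol n^\top\hat{\boldsymbol{\xi}}_\Pi|=|\boldsymbol n^\top\tilde{\boldsymbol{\xi}}_\Pi|$ with $\boldsymbol n\perp\boldsymbol{\xi}_\Pi$; your explicit check $\boldsymbol{z}^\top\boldsymbol{e}\ge a-B_c(k)>0$ just supplies the justification for the acute angle that the paper asserts without proof.
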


\begin{proof}
By non-expansiveness of projection,
$\|\tilde{\boldsymbol{\xi}}_{\Pi}\|\le B_c(k)$. Since
$\boldsymbol{\xi}_{\Pi}
=\hat{\boldsymbol{\xi}}_{\Pi}
-\tilde{\boldsymbol{\xi}}_{\Pi}$ and
$\|\hat{\boldsymbol{\xi}}_{\Pi}\|>B_c(k)$, the angle
$\Delta\theta$ between the estimated and true directions belongs
to $[0,\pi/2)$. Let $\boldsymbol n$ be a unit vector orthogonal
to $\boldsymbol{\xi}_{\Pi}$. Then, $ \|\hat{\boldsymbol{\xi}}_{\Pi}\|\sin\Delta\theta
=
|\boldsymbol n^\top\hat{\boldsymbol{\xi}}_{\Pi}|
=
|\boldsymbol n^\top\tilde{\boldsymbol{\xi}}_{\Pi}|
\leq
B_c(k)$. Applying the monotonicity of $\sin(\cdot)$ on $[0,\pi/2)$
gives the result.
\end{proof}

Until $\|\hat{\boldsymbol{\xi}}_\Pi\|>B_c(k)$ is established, the obstacle is not certified; in this case, however, $|\xi_3|\le\|\boldsymbol{\xi}_\Pi\|\le 2B_c(k)$, so its time-to-contact is at least $1/(2B_c(k))$.

\subsubsection{Monocular Footprint Inflation}
\label{subsec:monocular_depth}

The finite robot footprint is incorporated through a conservative
monocular relative-depth estimate. Treating the obstacle as static, the optical expansion rate
gives $\xi_3=-\dot\phi/\phi=v_{rel,3}/x_3$, so $\hat x_3=-\boldsymbol{v}/\hat{\xi}_3$, where $\boldsymbol{v}$ is the robot velocity along the optical axis. The relative depth estimate $\hat{x}_3$, which is the robot optical-axis speed multiplied by the time-to-contact, equals the true depth $x_3$ only for static obstacles. The footprint margin uses a running minimum over the last $N$ relative-depth estimates to avoid transient overestimation. The robust cone half-angle is
\begin{align}
\phi_r &= \phi + \Delta\theta_{\max} + \phi_{\mathrm{robot}}, \nonumber\\
\phi+\phi_{\mathrm{robot}}
&=\arcsin\big(\min\{1,\sin\phi+r_{\mathrm{robot}}/\hat{x}_3\}\big)
,
\label{eq:phi_robust}
\end{align}
with $r_{\mathrm{robot}}$ the radius of a camera-centered disc containing the robot footprint. Under the static-obstacle
assumption the estimate satisfies $\hat{x}_3/x_3=v/(v-v_{\mathrm{obs}}),$
so an approaching or static obstacle ($v_{\mathrm{obs}}\leq 0$) yields
$\hat{x}_3\leq x_3$ and hence a conservative margin, while a receding obstacle ($v_{\mathrm{obs}}>0$) yields a smaller margin but a correspondingly longer time-to-contact. The robust barrier and
safe set are therefore $ h_r=\cos\phi_r-\cos\theta$ and
$\mathcal{C}_r=\{\hat{\boldsymbol{\xi}}_\Pi:h_r\geq0\},$ combining certified directional uncertainty with a conservative monocular
footprint margin directly in the cone geometry, with no external depth
sensor.

\subsection{Control Synthesis}
\label{subsec:control_synthesis}

A go-to-goal waypoint controller generates the nominal input command ($\boldsymbol{u}_{\mathrm{nom}}$). The RV-C$^3$BF then filters
this command through the optimal-decay CBF \cite{zeng2021optimal}:
\begin{align}
\min_{\boldsymbol{u}^c,\Omega}\quad
&
\tfrac{1}{2}(\boldsymbol{u^c}-\boldsymbol{u}_{\mathrm{nom}})^\top W(\boldsymbol{u^c}-\boldsymbol{u}_{\mathrm{nom}})
+
\tfrac{1}{2}p_\Omega(\Omega-\Omega_0)^2
\nonumber\\
\text{s.t.}\quad
&
\boldsymbol{L_f} h_r+\boldsymbol{L_g} h_r\,\boldsymbol{u^c} \geq -\Omega\,\gamma h_r, \nonumber\\
&
\boldsymbol{u}_{\min}\leq \boldsymbol{u^c}\leq \boldsymbol{u}_{\max}, \qquad
0\leq\Omega\leq\bar{\Omega},
\label{eq:odcbf}
\end{align}
where $\boldsymbol{W}\succ0$ penalizes deviation from the nominal command,
$p_\Omega>0$ regulates the decay multiplier $\Omega$, and $\gamma>0$
is the CBF gain. The optimal-decay variable relaxes the prescribed
decay rate when necessary while remaining bounded. Thus, the
filter modifies the nominal command only as required to satisfy the
robust safety constraint. Fig. \ref{fig:BD} and Algorithm \ref{alg:pipeline} summarize the resulting perception-to-control loop.

\section{Application to an Acceleration-Controlled Unicycle}
\label{sec:unicycle}

The RV-C$^3$BF is implemented on the acceleration-controlled unicycle
model of the TurtleBot3. Under the constant-velocity obstacle assumption,
we show that the barrier has relative degree one with respect to the
acceleration input and can therefore be enforced directly using the
OD-CBF QP. The metric model below is used only for this analysis; all
quantities required by the controller are obtained from monocular
measurements and proprioception.

Consider the planar robot state
$\boldsymbol{x}=[x_p,y_p,\theta_c,v,\omega_c]^\top$ and input
$\boldsymbol{u^c}=[a_c,\alpha_c]^\top$, with the camera mounted a distance
$l$ ahead of the robot reference point. Let
$c=\cos\theta_c$ and $s=\sin\theta_c$. The camera position in the world frame is
$\boldsymbol{p}_c^W=[x_p+lc,\;y_p+ls]^\top$. For a constant-velocity
obstacle, the relative position and velocity are
$\boldsymbol{p}_{\rm rel}^W=\boldsymbol{p}_{\rm obs}^W-\boldsymbol{p}_c^W$
and
$\boldsymbol{v}_{\rm rel}^W=\dot{\boldsymbol{p}}_{\rm obs}^W-
\dot{\boldsymbol{p}}_c^W$, giving
$\dot{\boldsymbol{v}}_{\rm rel}^W=F+G\boldsymbol{u^c}$, where
$F=[v\omega_cs+l\omega_c^2c,\;-v\omega_cc+l\omega_c^2s]^\top$ and
$G=[-c,\;ls;\;-s,\;-lc]$ \cite{tayal2026collision}.

In the horizontal camera plane, with axes $x_1$ (right) and $x_3$
(forward), the world-to-camera transformation is
$R(\theta_c)= [s,\;-c;\;c,\;s]$, such that
$\boldsymbol{p}_{\rm rel}=R\boldsymbol{p}_{\rm rel}^W$ and
$\boldsymbol{v}_{\rm rel}=R\boldsymbol{v}_{\rm rel}^W$. Since
$\dot{\theta}_c=\omega_c$, $\dot R=\omega_cSR$, where
$S=[0,\;1;\;-1,\;0]$. Hence,
$\dot{\boldsymbol{p}}_{\rm rel}=\omega_cS\boldsymbol{p}_{\rm rel}
+\boldsymbol{v}_{\rm rel}$, where the first term is rotational image
motion removed during optical-flow de-rotation. Differentiating the
relative velocity gives
$\dot{\boldsymbol{v}}_{\rm rel}=\omega_cS\boldsymbol{v}_{\rm rel}
+F_c+G_c\boldsymbol{u^c}$, with
$F_c=[v\omega_c,\;l\omega_c^2]^\top$ and $G_c=[0,\;l;\;-1,\;0]$.

Let $\boldsymbol{e}= \hat{\boldsymbol{e}}_{\mathrm{LOS},\Pi} = \boldsymbol{p}_{\rm rel}/
\|\boldsymbol{p}_{\rm rel}\|$ and
$\boldsymbol{q}=\boldsymbol{v}_{\rm rel}/
\|\boldsymbol{v}_{\rm rel}\|$. The collision angle satisfies
$\cos\theta=-\boldsymbol{q}^\top\boldsymbol{e}$, yielding the robust
collision-cone barrier
$h_r=\cos\phi_r+\boldsymbol{q}^\top\boldsymbol{e}$.
The corresponding LOS and relative-motion dynamics are
\[
\dot{\boldsymbol e}
{=}
\omega_cS\boldsymbol e+
\frac{(I{-}\boldsymbol e\boldsymbol e^\top)\boldsymbol v_{\rm rel}}
{\|\boldsymbol p_{\rm rel}\|}, 
\dot{\boldsymbol q}
{=}
\omega_cS\boldsymbol q+
\frac{(I{-}\boldsymbol q\boldsymbol q^\top)
(F_c{+}G_c\boldsymbol u^c)}
{\|\boldsymbol v_{\rm rel}\|}.
\]

Differentiating the barrier gives
$\dot h_r=L_fh_r+L_gh_r\boldsymbol u^c$, where
$L_gh_r=
\boldsymbol e^\top(I-\boldsymbol q\boldsymbol q^\top)G_c/
\|\boldsymbol v_{\rm rel}\|$ and
$L_fh_r=-\sin\phi_r\,\dot\phi_r+
\boldsymbol q^\top\dot{\boldsymbol e}+
\boldsymbol e^\top\left[
\omega_cS\boldsymbol q+
(I-\boldsymbol q\boldsymbol q^\top)F_c/
\|\boldsymbol v_{\rm rel}\|\right]$.

\begin{proposition}[Relative Degree]
\label{prop:relative_degree}
For a fixed instantaneous robust-cone geometry with $0<\phi_r<\pi$,
the RV-C$^3$BF has relative degree one with respect to the
acceleration-controlled unicycle input.
\end{proposition}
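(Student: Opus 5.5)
Relative degree one amounts to two facts: $\dot h_r$ depends on $\boldsymbol u^c$, and $L_gh_r\neq\boldsymbol 0$ on the relevant set. With the cone geometry frozen, $\phi_r$ is treated as fixed when forming the input channel, so the $-\sin\phi_r\,\dot\phi_r$ term lies only in $L_fh_r$ and carries no $\boldsymbol u^c$. The term $\boldsymbol q^\top\dot{\boldsymbol e}$ involves only $\boldsymbol p_{\rm rel}$, $\boldsymbol v_{\rm rel}$ and $\omega_c$, which are states, so it also has no input. Hence the only input dependence comes from $\boldsymbol e^\top\dot{\boldsymbol q}$, giving
\[
L_gh_r=\frac{\boldsymbol e^\top(I-\boldsymbol q\boldsymbol q^\top)G_c}{\|\boldsymbol v_{\rm rel}\|}.
\]
It remains to show that this row vector is nonzero wherever the constraint matters.

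\textbf{Step 1.} Note that $G_c=[0,\;l;\;-1,\;0]$ has $\det G_c=l$, so $G_c$ is invertible when $l>0$. Then $L_gh_r=\boldsymbol 0$ if and only if $\boldsymbol e^\top(I-\boldsymbol q\boldsymbol q^\top)=\boldsymbol 0$. In the plane $I-\boldsymbol q\boldsymbol q^\top=\boldsymbol q^\perp(\boldsymbol q^\perp)^\top$, so this reduces to $\boldsymbol e^\top\boldsymbol q^\perp=0$, that is, $\boldsymbol e=\pm\boldsymbol q$, or equivalently $\cos\theta=\mp1$.

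\textbf{Step 2.} Show that this degenerate set is excluded where the constraint is evaluated. Because $0<\phi_r<\pi$, we have $\cos\phi_r\in(-1,1)$, so $h_r$ is strictly positive at $\theta=\pi$ and strictly negative at $\theta=0$. I would phrase relative degree one on a neighborhood of the boundary $\partial\mathcal C_r=\{h_r=0\}$, where $\cos\theta=\cos\phi_r\in(-1,1)$. There $\boldsymbol e\neq\pm\boldsymbol q$, so $|\boldsymbol e^\top\boldsymbol q^\perp|=\sin\phi_r>0$ and therefore $L_gh_r\neq\boldsymbol 0$. By continuity this extends to an open set around the boundary, which is all the OD-CBF QP \eqref{eq:odcbf} needs.

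\textbf{Step 3.} Handle well-posedness: the expressions require $\|\boldsymbol v_{\rm rel}\|>0$ and $\|\boldsymbol p_{\rm rel}\|>0$. Both are implicit in the definitions of $\boldsymbol e$ and $\boldsymbol q$, and nonzero relative motion corresponds to the certified regime of Lemma~\ref{lem:angle}, so I would state them as standing conditions. I would also assume $l>0$. If $l=0$ is allowed, $G_c$ drops to rank one, with only the $a_c$ channel acting along $-x_3$. Then I would check $\boldsymbol e^\top\boldsymbol q^\perp\,(\boldsymbol q^\perp)^\top[0,-1]^\top\neq0$, which fails only when $\boldsymbol q$ is parallel to the optical axis, and that case would need a separate remark.

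The main obstacle is Step 2: making precise that degeneracy occurs only at $\theta\in\{0,\pi\}$, which lies off the boundary, and stating the result in a form that is local to the boundary rather than global. The dependence of $\phi_r$ on the state through $\hat x_3$ and $B_c$ is exactly what the "fixed instantaneous geometry" hypothesis sets aside.
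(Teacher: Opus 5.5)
Your proof is correct and follows the paper's argument. Invertibility of $G_c$ (since $\det G_c=l>0$) reduces $L_gh_r=\mathbf 0$ to $\boldsymbol q=\pm\boldsymbol e$, and on $\partial\mathcal C_r$ the identity $\boldsymbol q^\top\boldsymbol e=-\cos\phi_r\in(-1,1)$ rules out both cases; your continuity extension to a neighborhood of the boundary and your well-posedness conditions ($\|\boldsymbol v_{\rm rel}\|>0$, $\|\boldsymbol p_{\rm rel}\|>0$, $l>0$) are reasonable additions that the paper leaves implicit.
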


\begin{proof}
Since $\det(G_c)=l>0$, $L_gh_r=0$ only if
$\boldsymbol q=\pm\boldsymbol e$. On $\partial\mathcal C_r$,
$\boldsymbol q^\top\boldsymbol e=-\cos\phi_r$; hence
$\boldsymbol q=\boldsymbol e$ requires $\phi_r=\pi$, while
$\boldsymbol q=-\boldsymbol e$ requires $\phi_r=0$. Thus,
$L_gh_r\neq0$ for $0<\phi_r<\pi$, establishing relative degree one.
\end{proof}

\begin{corollary}[Closed-Loop Safety]\label{cor:safety}
Model $i^\star$ as a disc of radius $r_{\mathrm{obs}}$ centered at $\boldsymbol{p}_{\mathrm{rel}}=[x_1\;x_3]^\top$, and let $d=\|\boldsymbol{p}_{\mathrm{rel}}\|$ and $\bar{d}=r_{\mathrm{obs}}+r_{\mathrm{robot}}$. Suppose that, on $[t_0,t_1)$, $i^\star$ is fixed; (\ref{eq:odcbf}) is feasible and applied continuously, $\dot{h}_r=L_fh_r+L_gh_r\boldsymbol{u}^c$ between perception updates, and $h_r\ge 0$ at $t_0$ and after each update; the hypotheses of Lemma~\ref{lem:angle} hold; $\Delta\theta_{\max}$ and $\phi_{\mathrm{robot}}$ are no smaller than in (\ref{eq:lemma1-revised})--(\ref{eq:phi_robust}), with $\phi_r<\pi$; and $\boldsymbol{e}_{\mathrm{LOS},\Pi}$ is exact, $\phi=\arcsin(\min\{1,r_{\mathrm{obs}}/d\})$, and $0<\hat{x}_3\le x_3$. If $d(t_0)\ge\bar{d}$, then $d\ge\bar{d}$ on $[t_0,t_1)$, i.e., the robot does not collide with $i^\star$.
\end{corollary}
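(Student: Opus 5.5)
The argument has two stages. First, the CBF constraint keeps the robust barrier nonnegative on the whole interval. Second, nonnegativity of the robust barrier, after stripping the inflation margins, says that the true relative velocity points outside an enlarged collision cone. That cone is exactly the set of directions along which the relative distance could drop to $\bar d$. The final step is a direction-to-distance argument by contradiction at the first time $d$ would reach $\bar d$.

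\emph{Step 1 (invariance).} Between perception updates, $\dot h_r=L_fh_r+L_gh_r\boldsymbol u^c\ge-\Omega\gamma h_r$ with $0\le\Omega\le\bar\Omega$. Comparison with $\dot z=-\bar\Omega\gamma z$ (or simply noting $\dot h_r\ge0$ whenever $h_r=0$) gives $h_r\ge0$ on each inter-update interval, since it starts nonnegative. At updates, $h_r\ge0$ holds by hypothesis, so $h_r\ge0$ on all of $[t_0,t_1)$. Proposition~\ref{prop:relative_degree} guarantees $L_gh_r\neq0$, so the constraint is nondegenerate, though feasibility is already assumed.

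\emph{Step 2 (from estimated to true direction).} Let $\hat\theta$ and $\theta$ be the angles of $-\hat{\boldsymbol\xi}_\Pi$ and $-\boldsymbol\xi_\Pi$ with $\boldsymbol e$. The LOS is exact and $\boldsymbol\xi_\Pi$ is parallel to $\boldsymbol v_{\rm rel}$ because $x_3>0$.
\begin{itemize}
\item Since $h_r\ge0$ and $\phi_r<\pi$, monotonicity of $\cos$ on $[0,\pi]$ gives $\hat\theta\ge\phi_r=\phi+\Delta\theta_{\max}+\phi_{\rm robot}$.
\item Lemma~\ref{lem:angle} gives $\Delta\theta\le\Delta\theta_{\max}$. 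The triangle inequality for angles on the circle then gives $\theta\ge\hat\theta-\Delta\theta\ge\phi+\phi_{\rm robot}$.
\item Since $\phi=\arcsin(\min\{1,r_{\rm obs}/d\})$, (\ref{eq:phi_robust}) with $\hat x_3\le x_3\le d$ gives
\[
\sin(\phi+\phi_{\rm robot})\ge\min\{1,\,r_{\rm obs}/d+r_{\rm robot}/\hat x_3\}\ge\min\{1,\,\bar d/d\}.
\]
Here $\phi+\phi_{\rm robot}$ is taken in $[0,\pi/2]$; if the hypotheses give a larger margin, monotonicity still yields $\theta\ge\arcsin(\min\{1,\bar d/d\})=:\beta$.
\end{itemize}
So the true relative velocity lies outside the cone of half-angle $\beta$. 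This is the classical collision cone of a disc of radius $\bar d$ around the obstacle.

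\emph{Step 3 (distance).} Suppose $d$ first reaches $\bar d$ at some $\tau>t_0$, with $d>\bar d$ just before (the case $d(t_0)=\bar d$ is handled in the same way). Then $\dot d=\boldsymbol e^\top\dot{\boldsymbol p}_{\rm rel}$, and the de-rotation term contributes nothing because $\boldsymbol e^\top S\boldsymbol e=0$. Hence $\dot d=\boldsymbol e^\top\boldsymbol v_{\rm rel}=-\|\boldsymbol v_{\rm rel}\|\cos\theta$. As $d\downarrow\bar d$ we have $\beta\to\pi/2$, so $\theta\ge\beta$ forces $\cos\theta\le\cos\beta=\sqrt{1-\bar d^2/d^2}$.

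A cleaner route is the squared miss distance. With $g:=d^2\sin^2\theta-\bar d^2$, the condition $\theta\ge\beta$ gives $g\ge0$ whenever $\theta\le\pi/2$; when $\theta>\pi/2$, $d$ is nondecreasing. So $d$ can only decrease while $\theta\le\pi/2$ and $d\sin\theta\ge\bar d$, which gives $d\ge d\sin\theta\ge\bar d$. This is the contradiction.

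I expect Step 3 to be the main obstacle. The cone condition is only instantaneous, while the relative velocity changes under the control. The argument must therefore rest on the pointwise inequality $d\ge d\sin\theta\ge\bar d$ whenever $\dot d<0$, rather than on straight-line extrapolation. A secondary difficulty is the angle-composition detail in Step 2, namely keeping all angles in $[0,\pi]$ and using $\phi_r<\pi$.
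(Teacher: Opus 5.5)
Your proof is correct and follows the paper's argument: the comparison lemma gives $h_r\ge0$; Lemma~\ref{lem:angle}, the angle triangle inequality and $\hat x_3\le x_3\le d$ then give $\theta^*\ge\arcsin(\min\{1,\bar d/d\})$ for the true direction; and $\dot d=-\|\boldsymbol v_{\rm rel}\|\cos\theta^*$ (using $\boldsymbol e^\top S\boldsymbol e=0$) finishes the argument. The squared-miss-distance detour in Step~3 is unnecessary, and its claim that $g\ge0$ whenever $\theta\le\pi/2$ actually fails when $d<\bar d$ and $\theta=\pi/2$ (harmless, since then $\dot d=0$); the paper simply notes that $d\le\bar d$ forces $\theta^*\ge\pi/2$, hence $\dot d\ge0$, so $d$ cannot fall below $\bar d$.
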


\begin{proof}
By (\ref{eq:odcbf}), $\dot{h}_r\ge-\Omega\gamma h_r$ between updates with $\Omega$ bounded, so the comparison lemma gives $h_r\ge 0$ and hence $\theta\ge\phi_r$, as $\theta,\phi_r\in[0,\pi]$. Let $\theta^*$ be $\theta$ evaluated with the true $\boldsymbol{\xi}_\Pi=\boldsymbol{v}_{\mathrm{rel}}/x_3$. By Lemma~\ref{lem:angle}, the triangle inequality for angles, and (\ref{eq:phi_robust}) with $\hat{x}_3\le x_3\le d$ and the expression for $\phi$, $\theta^*\ge\theta-\Delta\theta_{\max}\ge\phi+\phi_{\mathrm{robot}}\ge\arcsin(\min\{1,\bar{d}/d\})$. Thus $d\le\bar{d}$ implies $\theta^*\ge\pi/2$ and $\dot{d}=\boldsymbol{e}^\top\boldsymbol{v}_{\mathrm{rel}}=-\|\boldsymbol{v}_{\mathrm{rel}}\|\cos\theta^*\ge 0$ (as $\boldsymbol{e}^\top S\boldsymbol{e}=0$), so $d$ cannot fall below $\bar{d}$.
\end{proof}



\begin{algorithm}[t]
\caption{RV-C$^3$BF perception-to-control loop}
\label{alg:pipeline}
\begin{algorithmic}[1]
\STATE Acquire image; compute Lucas--Kanade flow $\boldsymbol{f}_{\text{tot}}$
\STATE De-rotate: $\boldsymbol{f}_{\text{trans}}\leftarrow \boldsymbol{f}_{\text{tot}} - \boldsymbol{f}_{\text{rot}}$ using proprioception 
\STATE Segment independently moving regions (DBSCAN + GrabCut); update tracks $\{\mathcal{P}_i\}$ 
\FOR{each tracked obstacle $i$}
  \STATE Form regression block $\boldsymbol{\Phi}^{(k)}_i$, $\boldsymbol{r}^{(k)}_i$ from $\mathcal{P}_i$ \eqref{eq:r-phi}
  \STATE Update FIFO history $(\boldsymbol{R}^H_{k,i},\boldsymbol{\Phi}^H_{k,i})$; update $\hat{\boldsymbol{\xi}}_{k,i}$ \eqref{eq:cl_update}
  \STATE Evaluate $\gamma_{\min},\gamma_{\max}$; compute $\boldsymbol{B_c(k)}$ 
  \STATE Compute nominal barrier $h_i$ \eqref{eq:vcbf}
\ENDFOR
\STATE Select critical obstacle:
$i^\star\leftarrow\arg\min_i h_i$, \quad $h_{\min}\leftarrow h_{i^\star}$
\STATE For obstacle $i^\star$, update running-minimum relative depth $\hat{x}_3$;
compute $\Delta\theta_{\max}$, $\phi_{\mathrm{robot}}$, $\phi_r$ \eqref{eq:phi_robust}
\STATE Form robust barrier $h_r=\cos\phi_r-\cos\theta$ 
\STATE Solve OD-CBF QP \eqref{eq:odcbf} for $u^c$; apply $\boldsymbol{u}^c$ to the robot
\end{algorithmic}
\end{algorithm}

\begin{figure}[t]
    \centering
    \includegraphics[width=\columnwidth]{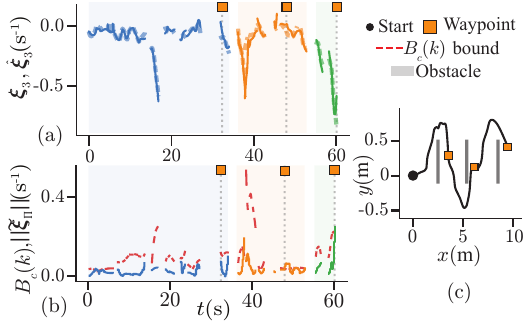}
    \caption{Mission-wide CL observer validation across three sequential
    static-obstacle encounters at $(2.5,0.2)$, $(5.5,0.2)$, and
    $(8.5,0.2)\,\mathrm{m}$. \textbf{(a)} Estimated $\hat{\xi}_3$ (solid)
    versus ground truth (dashed). \textbf{(b)} In-plane estimation error and online
    bound $B_c(k)$ (red, dashed). \textbf{(c)} Resulting trajectory through
    waypoints $(3.5,0.3)$, $(6.0,0.1)$, and $(9.5,0.4)\,\mathrm{m}$. Blue,
    orange, and green shading indicate obstacles 0, 1 and 2; gray dotted lines
    mark waypoint transitions.}
    \label{fig:cl-mission-validation}
\end{figure}

\begin{figure*}[t]
\includegraphics[width=\textwidth]{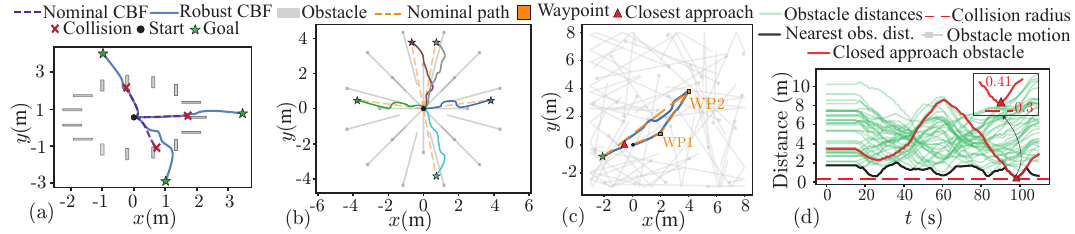}
\caption{\textbf{(a)} Nominal V-C$^3$BF vs.\ robust RV-C$^3$BF escaping a
15-obstacle ring: the nominal controller
collides in all three trials (red $\times$), the proposed method
succeeds in all three. \textbf{(b)} 20 obstacles converging on the
robot's start position; five commanded goals, all reached
collision-free. \textbf{(c)} Waypoint following through 60 moving
obstacles via $(2.0,0.8)$ and $(4.0,3.8)$ to goal $(-2.0,-0.8)$.
\textbf{(d)} Distances to all 60 obstacles (green) and to the nearest obstacle (black) in (c). The red solid curve shows the closest-approach obstacle, and the red dashed line marks the $0.30\,\mathrm{m}$ collision radius.}
\label{fig:results_sim1}
\end{figure*}

For implementation, the relative motion direction and relative-speed magnitude
are obtained from the CL estimate as
$\hat{\boldsymbol q}=\hat{\boldsymbol\xi}_{\Pi}/\|\hat{\boldsymbol\xi}_{\Pi}\|$ and
$\|\boldsymbol v_{\rm rel}\|=x_3\|\boldsymbol\xi_{\Pi}\|
\approx\hat x_3\|\hat{\boldsymbol\xi}_{\Pi}\|$. Thus,
$L_gh_r=
\boldsymbol e^\top(I-\hat{\boldsymbol q}\hat{\boldsymbol q}^\top)G_c/
(\|\hat{\boldsymbol\xi}_\Pi\|\hat x_3)$ and
$L_fh_r=-\sin\phi_r\,\dot\phi_r+
\hat{\boldsymbol q}^\top\dot{\boldsymbol e}+
\boldsymbol e^\top\left[
\omega_cS\hat{\boldsymbol q}+
(I-\hat{\boldsymbol q}\hat{\boldsymbol q}^\top)F_c/
(\|\hat{\boldsymbol\xi}_{\Pi}\|\hat x_3)\right]$.
Between perception updates, $\boldsymbol{e}$ and $\hat{\boldsymbol{q}}$ are propagated with the dynamics above, while the uncertainty and footprint margins are held constant, with $\dot{\phi}=-\hat{\xi}_3\phi$.

\section{Validation}
The validation is structured to test the central hypothesis progressively:
the scale-free relative-motion representation is first recovered from
image-space measurements, its uncertainty is quantified, the resulting
representation is used for robust safety certification, and the complete
framework is finally evaluated on a real robot.

\subsection{Simulation Results}
\label{subsec:simulation}

The framework is implemented in Python/ROS~2 and evaluated in Gazebo
using a TurtleBot3 with a forward-facing monocular camera ($80^\circ$
field of view), mounted $l=76$\,mm ahead of the axle, and $r_{\mathrm{robot}}=0.09$\,m. Unless stated otherwise, simulations
use $N=6$ frames, $\gamma=1$, and acceleration limits
$\boldsymbol{u}_{\min}=[-0.5,-1.5]^\top$ and
$\boldsymbol{u}_{\max}=[0.5,1.5]^\top$, in m/s$^2$ and rad/s$^2$,
respectively. 

\subsubsection{Concurrent Learning Observer Validation}
\label{sec:cl-validation}

Fig.~\ref{fig:cl-mission-validation} evaluates the CL observer across
three static-obstacle encounters, including brief field-of-view loss
and reacquisition. As shown in
Fig.~\ref{fig:cl-mission-validation}(a), the estimated looming
component $\hat{\xi}_3$ closely follows the ground truth, with an RMSE of
$0.0780\,\mathrm{s}^{-1}$. The scale-free relative-motion
representation required for collision-cone construction is accurately
recovered directly from image-space measurements. Fig.~\ref{fig:cl-mission-validation}(b)
shows the in-plane estimation error and online uncertainty bound
$B_c(k)$, with the bound containing the measured error in $98.3\%$ of
the evaluated samples. The remaining $1.7\%$ of violations reflect unmodeled uncertainties arising from obstacle segmentation and optical flow measurement noise.
While (\ref{eq:stacked_regression}) accounts for within-window variation in $\boldsymbol{\xi}$,
it does not explicitly capture optical-flow errors, numerical
differentiation, de-rotation residuals, or depth variation and
segmentation leakage. The high containment rate of the
online uncertainty bound validates its use for online robust cone inflation. A six-frame history achieved the lowest estimation RMSE
among the tested window lengths and was used in subsequent experiments. Fig.~\ref{fig:cl-mission-validation}(c)
shows the corresponding navigation trajectory, where the robot
successfully deflects around the three obstacles and resumes its
waypoint path following each field-of-view interruption.

\subsubsection{Robust Collision-Cone Validation}

\begin{figure*}[t]
\centering
\includegraphics[width=\linewidth]{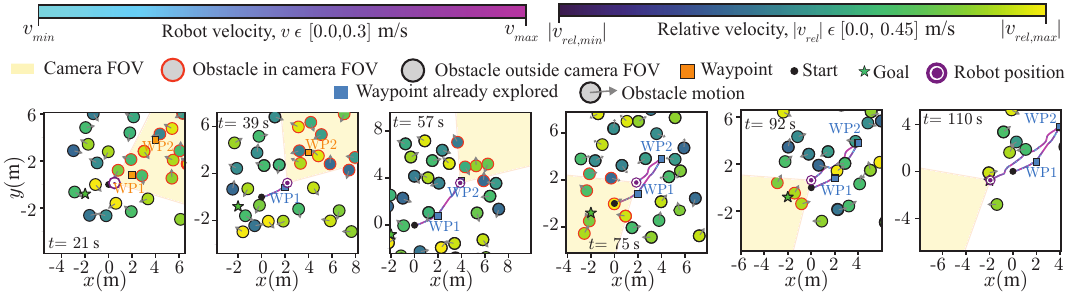}
\caption{Snapshots of the waypoint-following mission of
Fig.~\ref{fig:results_sim1}(c). The purple trace shows the path traversed so far, colored by robot's
instantaneous speed ($v$), illustrating the online adaptation of the
RV-C$^3$BF as it continuously deflects around moving obstacles while
maintaining progress toward the active waypoint.}
\label{fig:snapshot}
\end{figure*}

Fig.~\ref{fig:results_sim1}(a) compares the proposed RV-C$^3$BF with
the nominal vision-based collision cone using identical perception,
obstacle configuration, and nominal control. The robot starts inside a
ring of 15 static obstacles and is commanded toward goals outside the
ring. Across three trials for each controller, the nominal controller
collides with the ring in every trial, whereas the RV-C$^3$BF safely
threads through the obstacles and reaches the goal in all trials. This
experiment isolates the effect of uncertainty propagation and footprint inflation, since
perception, obstacle configuration, and nominal control remain
unchanged. The improved behavior results from incorporating the
certified perception uncertainty and robot-footprint margin into the
collision-cone geometry, enabling earlier safety intervention.

\subsubsection{Dynamic Obstacle Avoidance}
\label{subsec:dynamic_obstacle_avoidance}

The proposed RV-C$^3$BF is evaluated in dynamic environments, where
the collision cone is updated online from the estimated normalized
relative motion without prior knowledge of obstacle trajectories. The
robot speed is limited to $0.3\,\mathrm{m/s}$. Fig.~\ref{fig:results_sim1}(b)
shows 20 obstacles moving radially toward the robot's initial position
from different directions. For each of five commanded goals, the robot
deviates from the nominal path only when required for collision
avoidance and reaches the goal.

Fig.~\ref{fig:results_sim1}(c) extends the evaluation to a
longer-horizon mission with 60 obstacles moving at distinct constant
velocities, with the robot sequentially tracking waypoints
$(2.0,0.8)$ and $(4.0,3.8)$ before reaching the final goal at
$(-2.0,-0.8)$ (snapshots in
Fig.~\ref{fig:snapshot}). Fig.~\ref{fig:results_sim1}(d) shows the
resulting clearance throughout the mission. The distance to every
obstacle (green) and the distance to the nearest obstacle at each
instant (black) both remain above the $0.30\,\mathrm{m}$ collision
radius, with a minimum observed clearance of $0.41\,\mathrm{m}$.

These results demonstrate that the proposed framework can update
collision geometry online and maintain safe navigation in cluttered
dynamic environments using monocular vision and onboard proprioception.

\begin{table}[t]
\centering
\caption{Average processing time of the vision and control stages of the implementation pipeline}
\label{tab:runtime}
\small
\setlength{\tabcolsep}{3pt}
\begin{tabular}{lcccccc}
\hline
& \textbf{Opt. Flow} & \textbf{Seg.} & \textbf{CL}
& \textbf{Cone} & \textbf{CBF-QP} & \textbf{Ctrl.} \\
\hline
\textbf{Time (ms)}
& 7.36 & 11.83 & 2.60 & 0.93 & 0.26 & 0.36 \\
\hline
\end{tabular}
\end{table}

\subsubsection{Runtime Performance}
\label{sec:runtime}

Table~\ref{tab:runtime} summarizes the execution time of the vision and control stages. The perception pipeline is
computationally dominated by motion segmentation, which includes the
obstacle-mask refinement step, while the CL update and collision-cone
evaluation remain lightweight. The perception and control loops run
asynchronously: intermediate camera frames are discarded while perception is busy, preventing backlogs and allowing the $50$-Hz
control loop to operate independently of the lower-rate perception
pipeline.

\subsection{Experimental Setup}
\label{sec:experimental_setup}

The proposed framework is evaluated on a TurtleBot3 equipped with an
Intel RealSense camera operated in monocular RGB mode, with its depth
stream disabled. All perception, normalized relative-motion estimation,
and control computations are performed onboard an NVIDIA Jetson Orin
Nano. The system relies exclusively on monocular vision and
proprioceptive odometry, without external sensing or offboard
computation. Obstacle states and 3D geometry are neither provided to the controller nor explicitly reconstructed; instead, collision-cone constraints are constructed directly from monocular image measurements and depth-normalized relative-motion estimates. The maximum
achievable forward speed of the experimental TurtleBot3 platform is
$0.22\,\mathrm{m/s}$.

The experiments evaluate the sufficiency of the proposed representation
for safety certification rather than serving as a navigation-performance
benchmark. Experiments are conducted in two environments (Fig.~\ref{fig:results_exp}):
(a) an open laboratory space and (b) a narrow corridor, both populated
with static and dynamic obstacles. Static obstacles are represented by
boxes, while a second TurtleBot3 serves as the dynamic obstacle, moving
at constant velocities between $0.09$ and $0.12\,\mathrm{m/s}$. Initial conditions,
goal positions, obstacle configurations, and dynamic-obstacle motions
are randomized across trials in both environments. The 100 hardware
trials therefore constitute a Monte Carlo study under varying
experimental conditions.

\subsubsection{Runtime Performance and Discussion}
\label{sec:runtime_exp}

The onboard Jetson Orin Nano executes the perception pipeline at
approximately $15\,\mathrm{Hz}$, while the CBF-QP and control loops
operate at $50\,\mathrm{Hz}$. The asynchronous architecture allows the
controller to operate independently of perception updates, using the
latest available collision-cone representation. The system avoided collisions in 93 out of 100 hardware trials;
the seven failures occurred when obstacles entered the camera field
of view at very close range, leaving insufficient time for perception
and avoidance. No OD-CBF-QP infeasibility
(\ref{eq:odcbf}) was observed. These results identify late perception,
rather than QP infeasibility, as the observed failure mechanism.

The safety certificate applies to the selected critical obstacle,
with the cone margins
$\Delta\theta_{\max}$ and $\phi_{\mathrm{robot}}$ held constant
between perception updates. Its validity also relies on Assumption~1,
which requires a sufficiently informative history stack. Consequently,
the reported guarantees do not establish safety for obstacles that
appear too close for the required perception and estimation conditions
to be met. Improving perception throughput and handling sudden
obstacle appearances are therefore important directions for improving
real-world performance.

\begin{figure*}[t]
    \centering
    \includegraphics[width=\linewidth]{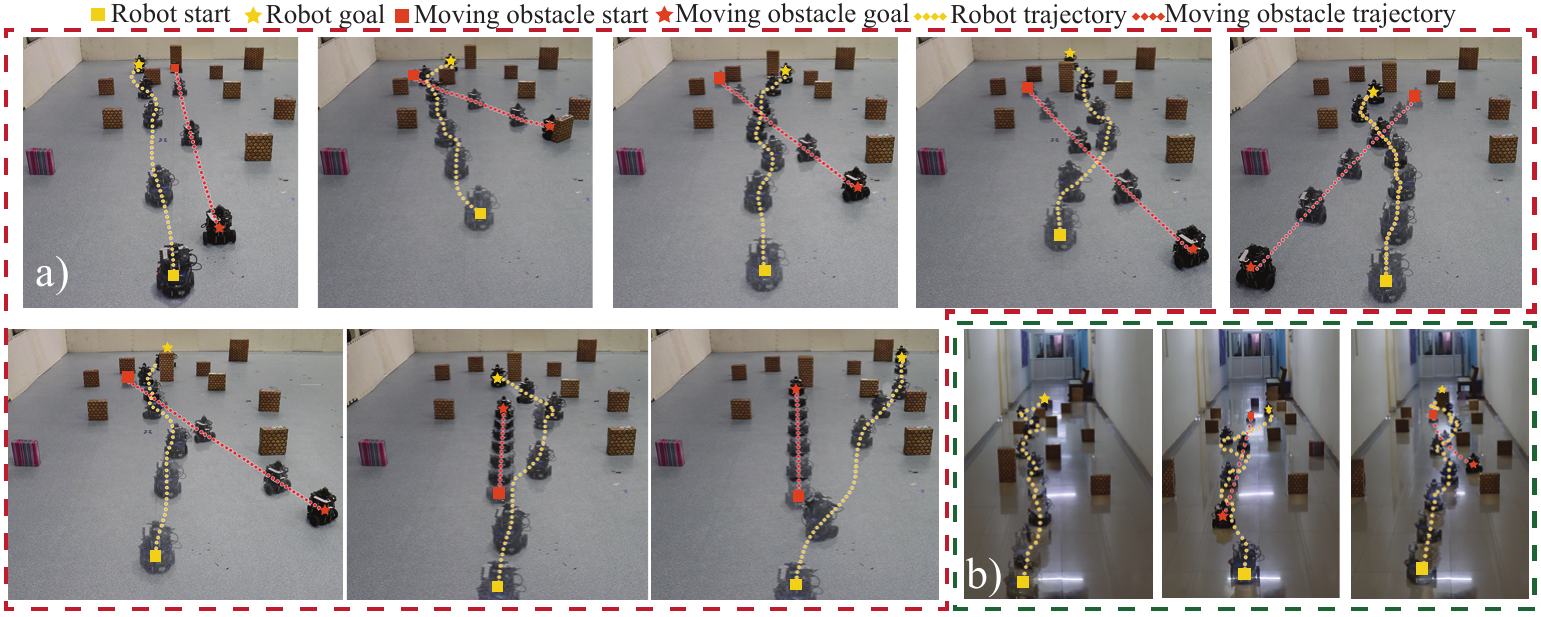}
    \caption{Experimental validation of the proposed RV-C$^3$BF under different initial and final conditions, demonstrating collision avoidance and safe navigation in cluttered static and dynamic environments: a) open laboratory space, and b) narrow corridor.}
    \label{fig:results_exp}
\end{figure*}

\subsection{Comparison with Existing Approaches}
\label{subsec:comparison}

Table~\ref{tab:comparison} summarizes representative vision-based
approaches in terms of sensing, safety certification, obstacle-state
reconstruction, and dynamic-obstacle validation. To the best of our
knowledge, this framework is among the first to combine onboard
monocular perception with formal safety certification without explicit
obstacle-state reconstruction, with validation under independent
obstacle motion.

For a direct comparison, we evaluate the full-state C$^3$BF
\cite{tayal2026collision} under identical Gazebo scenarios, nominal
control, and OD-CBF settings. Enforcing all collision-cone constraints
renders the baseline QP infeasible in every scenario in
Fig.~\ref{fig:results_sim1}, consistent with
\cite{park2026beyondieee,sah2026adversarially}. When restricted to the
selected critical obstacle, however, both C$^3$BF and the proposed
RV-C$^3$BF remain feasible and achieve collision-free navigation.
This comparison highlights the role of critical-obstacle selection in
maintaining feasibility and shows that the proposed framework can
achieve collision-free navigation using scale-free relative motion
from monocular vision and proprioception, without explicit
obstacle-state reconstruction.

\begin{table}[t]
\centering
\caption{Comparison of representative approaches for vision-based
collision avoidance and safety certification.}
\label{tab:comparison}
\scriptsize
\setlength{\tabcolsep}{2pt}
\renewcommand{\arraystretch}{1.05}
\begin{tabular}{@{}p{0.25\columnwidth}
                p{0.15\columnwidth}
                p{0.14\columnwidth}
                p{0.21\columnwidth}
                p{0.17\columnwidth}@{}}
\hline
\textbf{Approach} &
\textbf{Sensing} &
\textbf{Safety} &
\textbf{State/Map} &
\textbf{Dynamic Val.} \\
\hline
Optical flow
\cite{deng2026autonomous,hu2025seeing}
& Mono. & No & No & No \\

C$^3$BF
\cite{tayal2026collision}
& Full state & Yes & Required & Yes \\

RGB-D CBF
\cite{abdi2023safe}
& RGB-D & Yes & Required & Yes$^{*}$ \\

Point-cloud CBF
\cite{de2024point}
& Depth/LiDAR & Yes & Required & Yes$^{**}$ \\

VLM-tuned CBF
\cite{chen2026dynamic}
& RGB$^{\ddagger}$ & Yes & Required & Yes$^{***}$ \\

Bioinspired SF
\cite{escobar2019bioinspired}
& Mono. & No & No & No \\

Semantic CBF
\cite{zhang2026embedding}
& Mono. & Yes & Required & No \\

\textbf{Proposed}
& \textbf{Mono.} & \textbf{Yes}
& \textbf{No} & \textbf{Yes} \\
\hline
\end{tabular}

\vspace{2pt}
\parbox{\columnwidth}{\scriptsize
Mono.: monocular. State/Map indicates reliance on external depth
measurements, explicit 3D geometry, reconstructed maps, or explicit
obstacle states, as applicable.
$^{*}$ Dynamic validation in CARLA.
$^{**}$ Validation limited to one hardware run per platform. $^{\ddagger}$ RGB sets the CBF gain via a VLM; the barrier uses
known obstacle centers and radii.
$^{***}$ Dynamic validation in Isaac Sim.
}
\end{table}

\section{Conclusion}

This letter presented a visual certification framework that constructs collision-risk certificates directly from optical flow and onboard
proprioception. The key insight is that depth-normalized relative
motion provides a scale-free representation for collision-cone
construction without explicit obstacle-state or 3D-environment
reconstruction. A Concurrent Learning observer estimates this relative
motion and provides an online estimation-error bound, which is
propagated into a robust collision cone incorporating a robot-footprint
margin. The resulting RV-C$^3$BF is enforced through an optimal-decay
CBF-QP and evaluated in simulation and on a TurtleBot3 across static,
dynamic, and long-horizon navigation scenarios. Although demonstrated
on a differential-drive platform, the framework may be adapted to
other acceleration-controlled vehicles, subject to their dynamics and
sensing assumptions. Future work will investigate event-based and
omnidirectional cameras to alleviate the perception bottleneck and
extend the framework to agile platforms such as quadrotors. 

\bibliographystyle{IEEEtran}
\bibliography{References}
\end{document}